\documentclass[letterpaper]{article} 
\usepackage[preprint]{aaai2027}  
\usepackage[hyphens]{url}  
\usepackage{graphicx} 
\usepackage{natbib}  
\usepackage{caption} 
\usepackage{algorithm}
\usepackage{algorithmic}

\usepackage{newfloat}
\usepackage{listings}
\DeclareCaptionStyle{ruled}{labelfont=normalfont,labelsep=colon,strut=off} 
\floatstyle{ruled}
\newfloat{listing}{tb}{lst}{}
\floatname{listing}{Listing}

\usepackage{booktabs}

\usepackage{amsmath}
\usepackage{amssymb}
\usepackage{amsthm}
 
\theoremstyle{plain}
\newtheorem{theorem}{Theorem}
\newtheorem{lemma}{Lemma}
\newtheorem{proposition}{Proposition}
\newtheorem{corollary}{Corollary}
\theoremstyle{definition}
\newtheorem{definition}{Definition}
\newtheorem{assumption}{Assumption}
\theoremstyle{remark}
\newtheorem{remark}{Remark}
 
\newcommand{\E}{\mathbb{E}}

\newcommand{\Prob}{\Pr}
\newcommand{\pllm}{p_{\text{LLM}}}
\newcommand{\me}{m_e}
\newcommand{\mo}{m_o}
\newcommand{\envp}{\varepsilon^{+}}
\newcommand{\envm}{\varepsilon^{-}}

\title{Bilevel Coordinated Reflection: A Game-Theoretic Approach to Multi-Agent LLM Systems}
\author{
    Yihang Chen\textsuperscript{\rm 1}\thanks{Equal contribution.},
    Yuxiang Chen\textsuperscript{\rm 1}\footnotemark[1],
    Yuxuan Huang\textsuperscript{\rm 2},
    Meng Fang\textsuperscript{\rm 2},
    Weilin Luo\textsuperscript{\rm 3},
    Jun Wang\textsuperscript{\rm 1}\thanks{Corresponding author.}
}
\affiliations{
    \textsuperscript{\rm 1}UCL Centre for Artificial Intelligence \quad
    \textsuperscript{\rm 2}University of Liverpool  \quad
    \textsuperscript{\rm 3}Huawei\\
}

\begin{document}
\maketitle

\begin{abstract}
Multi-agent LLM systems commonly use an orchestrator to decompose a task
for a team of workers and then improve through textual reflection. Despite
strong empirical results, these systems lack a unified account of
coordination, memory improvement, and the role of external verification.
We model orchestrator--worker interaction as a bilevel coordination game:
under bounded coupling, the workers' local-update game is an approximate
potential game whose equilibrium slack is controlled by decomposition
quality. We then analyse reflection as stochastic movement over semantic
memory states. For free-form reflection, we derive a finite-time upper
bound, prove worst-case tightness, and give a positive lower bound under a
falsifiable persistent-harm condition. We further prove an
information-theoretic impossibility result: no gate that observes only the
generated transcript can improve uniformly over text-indistinguishable
environments, whereas an environment-grounded gate can. Motivated by this
separation, we introduce Stochastic Reflective Memory Ascent (SRMA), which
accepts a candidate memory only after a grounded evaluation risk strictly
decreases. Under calibration and non-degenerate corrective mass, SRMA
converges exactly, geometrically or polynomially; matching constructions
show that both rate regimes are order-tight. We also provide confidence
gating for stochastic evaluation and re-anchoring guarantees for
piecewise-stationary environments. Experiments instantiate these objects
with environment-grounded metrics and test the predicted coordination and
drift laws. On 500 SWE-bench instances, the complete Kimi-based system
resolves $72.2\%$ versus a $70.8\%$ public mini-SWE-agent reference. Code available at \textcolor{blue}{\url{https://github.com/YihangChen9/Bilevel-Coordinated-Reflection}}.
\end{abstract}

\begin{figure*}[t]
    \centering
    \includegraphics[width=0.98\linewidth]{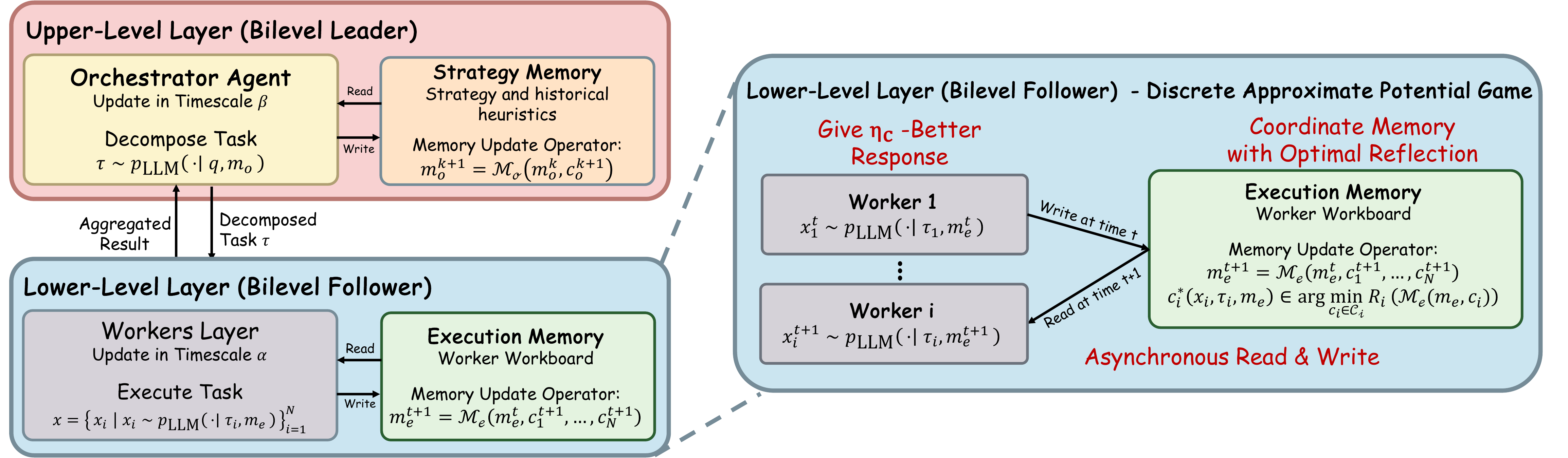}
    \caption{Bilevel coordinated reflection. The orchestrator (leader)
    selects a decomposition $\tau$ and updates strategy memory $m_o$ on the
    slower timescale; workers (followers) update execution memory $m_e$ via
    $\eta_c$-better responses on the faster timescale. Under bounded
    coupling, the followers' subgame is an approximate potential game with
    slack $\eta_c \le 2 d_{\max}\kappa$, while verifier-gated SRMA
    separately governs which memory proposals are committed.}
        \label{fig:game}
\end{figure*}
\section{Introduction}
\label{sec:intro}
Multi-agent LLM systems have become a common recipe for tasks too large or
structured for a single agent: an orchestrator decomposes the task, worker
models solve the pieces, and the team improves by \emph{reflecting}---writing
critiques, hypotheses, and lessons into a shared textual memory that
conditions subsequent generations
\citep{wu2023autogen,hong2024metagpt,shinn2023reflexion,benkovich2026agyn,qian2025scaling}.
Because model weights are frozen at test time, memory editing is the
principal adaptation channel~\citep{wang2025memento2,amem2025,zhang2025memory_survey},
and such loops often work better when grounded by a test harness,
simulator, execution engine, or formal checker.

The dominant account of these systems is nevertheless procedural. Existing
frameworks~\citep{zhang2025aflow,hu2025adas,dang2025evolving,wang2025evoagentx}
specify who communicates with whom and which buffer is updated,
but not the strategic object that the agents stabilise to or the quantity
that reflection improves. This leaves three unresolved questions. First,
how does the orchestrator's decomposition quality control worker
coordination? Second, when does unconditional reflection plateau rather
than converge? Third, why can an external verifier succeed where a stronger
text-only critic may still fail?

We address these questions in a single framework. The
orchestrator--worker pipeline is modelled as a bilevel coordination game
whose follower subgame is an approximate potential game, and textual
memory editing as a stochastic process over a discrete semantic state
space. For free-form reflection, a one-sided drift condition yields a
finite-time upper bound that is tight in the worst case; a universal
positive floor requires an additional, explicitly testable persistent-harm
condition---unconditional commitment alone is not enough.

We then isolate the informational role of verification: in two
environments with identical text-generation laws but opposite meanings for
the same reflections, any possibly randomised, history-dependent gate that
observes only the transcript behaves identically and therefore cannot
improve both---even an ideal text-only judge---whereas a grounded verifier
distinguishes the pair and recovers geometric convergence.

Motivated by this separation, we introduce Stochastic Reflective Memory
Ascent (SRMA), which commits a candidate memory only when a fixed grounded
evaluation protocol certifies a strict decrease in verifier risk. Under
calibration and non-degenerate corrective mass, SRMA converges exactly at
order-tight geometric or polynomial rates; a confidence gate handles
stochastic probes, and re-anchoring restores per-segment convergence under
piecewise stationarity.

The theory is instantiated on a hidden-cap resource contest, Overcooked
with an exact BFS value table, and SWE-bench~\citep{jimenez2024swebench}.
The controlled environments expose the strategic, memory, and drift
quantities directly without an LLM-as-judge; on SWE-bench the complete
Kimi-based system resolves $361/500$ instances ($72.2\%$) versus
$70.8\%$ for the public mini-SWE-agent v2 reference.

In summary, we contribute: \textbf{(1)} a bilevel coordination game
linking decomposition coupling to follower equilibrium slack
(Sec.~\ref{sec:formulation}); \textbf{(2)} a two-sided drift analysis of
free-form reflection, tight in the worst case, with a universal lower
bound under persistent harmful commitment (Sec.~\ref{sec:memory_driven});
\textbf{(3)} an impossibility theorem for self-contained text-only gates,
with a grounded comparator that converges geometrically
(Sec.~\ref{sec:impossibility}); \textbf{(4)} SRMA, with exact convergence,
order-tight rates, and a finite-probe confidence extension
(Sec.~\ref{sec:srma}); and \textbf{(5)} mechanism-level validation on
Resource Contest and Overcooked plus end-to-end results on SWE-bench
(Sec.~\ref{sec:experiments}).

\section{Related Work}
\label{sec:related}

\textbf{Multi-agent LLM frameworks.} Orchestrator--worker architectures
such as AutoGen~\citep{wu2023autogen}, MetaGPT~\citep{hong2024metagpt} and
Agyn~\citep{benkovich2026agyn} show strong empirical performance
but offer no convergence analysis; failure modes such as hallucination
cascades are documented empirically~\citep{agenthallu2026,cemri2025whyfail}.
We provide the missing game-theoretic and stochastic-approximation
foundations.

\textbf{Self-reflection, self-evaluation, and grounding.}
Reflexion~\citep{shinn2023reflexion} and
Self-Refine~\citep{madaan2023selfrefine} improve outputs by appending
self-generated critiques but may plateau, and correlated self-evaluation
bias \citep{zheng2023llmjudge,panickssery2024llmevaluators,council2026}
weakens model-based judges in practice. Our drift analysis separates a
worst-case floor from the persistent-harm condition needed for a universal
lower bound, and our indistinguishable-environment theorem shows that
without an environment-dependent signal even an ideal text-only gate
cannot be uniformly correct.

\textbf{Potential games and drift analysis.} Our followers' subgame builds
on exact and approximate potential games
\citep{monderer1996potential,candogan2011flows,christodoulou2014approximate}
and weakly coupled team problems~\citep{srikant1992nash}. The convergence
analysis uses Foster--Lyapunov drift~\citep{hajek1982hitting,meyn2009markov},
classical stochastic approximation
\citep{robbins1951stochastic,borkar2008stochastic,bertsekas2000gradient} and,
for the gated regime, multiplicative and variable drift theorems from
randomised search heuristics
\citep{doerr2012multiplicative,johannsen2010thesis,lehre2021variable};
the recursion $e_{t+1}\le e_t - c\,e_t^{1+\beta}$ is the discrete
stochastic analogue of Polyak--{\L}ojasiewicz-type
conditions~\citep{karimi2016linear,chung1954stochastic}. Two-timescale
bilevel structure follows~\citet{borkar1997two,hong2023two}.

\section{Methodology}
\label{sec:method}

Longer derivations are deferred to the supplementary material.

\subsection{Problem Formulation: Bilevel Coordination Game}
\label{sec:formulation}

We formalise the resolution of a complex user query $q \in \mathcal{Q}$.
The objective is a joint structured output $x \in \mathcal{X}$ maximising
a global utility $U(x)$ (logical correctness, constraint satisfaction). In
a naive single-agent paradigm the entire output is generated directly from
the query via the frozen LLM kernel, $x \sim \pllm(\cdot \mid q)$, which
for large tasks induces context dilution and reasoning
degradation~\citep{liu2024lost,levy2024same,du2025contextlength}. Contemporary systems instead
let an orchestrator partition the task among
workers~\citep{wu2023autogen,hong2024metagpt,liu2025selectdecompose}.

We model this as a \textit{bilevel coordination game}. The orchestrator
(Leader) generates a strategy profile
$\tau = (\tau_1, \ldots, \tau_N) \in \mathcal{T}$,
\begin{equation}
\label{eq:assign}
    \tau \sim \pllm(\cdot \mid q),
\end{equation}
assigning subtask $\tau_i$ to worker $i$ (Follower), who generates a local
sub-solution $x_i \sim \pllm(\cdot \mid \tau_i)$; the global output is
$x = (x_1, \ldots, x_N)$.

Unlike the idealised independent decomposition of classical
potential-game analyses~\citep{monderer1996potential}, real multi-agent
LLM systems exhibit non-trivial cross-worker interactions: shared
variables, common interfaces, joint
constraints~\citep{agenthallu2026}. We adopt a \emph{weakly
coupled} decomposition in the spirit
of~\citet{srikant1992nash,candogan2011flows}.

\begin{assumption}[Weakly Coupled Decomposability]
\label{assum:decomposability}
Each worker action set $\mathcal X_i$ is finite. The worker payoff is the local objective $g_i(x;\tau):=u_i(x_i\mid\tau_i)$, while the system-level objective is $U$. The global utility admits
\begin{align}
\label{eq:weakly_coupled}
    U(x) ={}& \sum_{i=1}^{N} u_i(x_i \mid \tau_i) \notag\\
    &+ \sum_{(i,j) \in \mathcal{E}} \psi_{ij}(x_i, x_j \mid \tau_i, \tau_j),
\end{align}
where $\mathcal{E}$ is an undirected interaction graph induced by $\tau$, with each edge counted once, and
$\kappa := \sup_{(i,j) \in \mathcal{E}} \sup_{x_i, x_j}
|\psi_{ij}(x_i, x_j \mid \tau_i, \tau_j)| < \infty$.
Let $\mathcal{N}_i$ be worker $i$'s coupled neighbours and
$d_{\max} := \max_i |\mathcal{N}_i|$.
\end{assumption}

When $\kappa = 0$ the system reduces to the independent case; $\kappa$ and
$d_{\max}$ jointly quantify decomposition quality. Since LLM generation is
stochastic, the system objective is the \emph{expected} global utility
$\E[U(x)]$.

\begin{lemma}[Approximate Potential Game]
\label{lemma:potential_game}
Under Assumption~\ref{assum:decomposability} and fixed $\tau$, the
workers' subgame is an $\eta_c$-approximate potential game with potential
$\E[U(x)]$ and slack
\begin{equation}
    \eta_c \;\le\; 2\, d_{\max}\, \kappa .
\end{equation}
\end{lemma}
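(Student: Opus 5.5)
We will verify the approximate-potential inequality directly, by comparing how a worker's own payoff and the candidate potential change under a unilateral deviation. Fix $\tau$ and work with mixed (or LLM-induced) strategy profiles $\sigma=(\sigma_1,\dots,\sigma_N)$, where $\sigma_i$ is a distribution over the finite set $\mathcal X_i$ and the draws are independent across workers given $\tau$. The payoff of worker $i$ is $G_i(\sigma):=\E_\sigma[g_i(x;\tau)]=\E_{\sigma_i}[u_i(x_i\mid\tau_i)]$. The candidate potential is $\Phi(\sigma):=\E_\sigma[U(x)]$.

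The target is the standard $\eta_c$-approximate potential inequality. For every $i$, every $\sigma_{-i}$, and every pair $\sigma_i,\sigma_i'$ we want
\begin{equation*}
\bigl|\,[G_i(\sigma_i',\sigma_{-i})-G_i(\sigma_i,\sigma_{-i})]-[\Phi(\sigma_i',\sigma_{-i})-\Phi(\sigma_i,\sigma_{-i})]\,\bigr|\le\eta_c .
\end{equation*}
Pure profiles are the special case of point masses, so the same argument covers them.

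\textbf{Step 1 (cancellation of terms not involving $i$).} Expand $\Phi$ using the decomposition in Assumption~\ref{assum:decomposability} and linearity of expectation. The terms $\E[u_j(x_j\mid\tau_j)]$ with $j\neq i$ and the terms $\E[\psi_{jk}]$ on edges not incident to $i$ depend only on $\sigma_{-i}$, so they cancel in the difference. The term $\E[u_i]$ appears identically in both $G_i$ and $\Phi$, so it also cancels in the difference of differences. What remains is
\begin{equation*}
\sum_{j\in\mathcal N_i}\Bigl(\E_{\sigma_i',\sigma_j}[\psi_{ij}]-\E_{\sigma_i,\sigma_j}[\psi_{ij}]\Bigr).
\end{equation*}
Here we use that each edge is counted once, so each incident edge contributes exactly one $\psi_{ij}$ term.

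\textbf{Step 2 (bound).} Each expectation of $\psi_{ij}$ lies in $[-\kappa,\kappa]$ because $|\psi_{ij}|\le\kappa$ uniformly. Each difference is therefore at most $2\kappa$ in absolute value. Summing over at most $|\mathcal N_i|\le d_{\max}$ neighbours gives the bound $2d_{\max}\kappa$, uniformly in $i$, $\sigma$ and $\sigma'$. Taking $\eta_c$ to be the supremum of the left-hand side yields $\eta_c\le 2d_{\max}\kappa$.

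The only delicate point is making precise what ``potential $\E[U(x)]$'' means when LLM sampling is the strategy: we must confirm that the expectation factorises over workers given fixed $\tau$. This is what lets the terms not involving $i$ cancel exactly, and it holds because each $x_j\sim\pllm(\cdot\mid\tau_j)$ is drawn conditionally independently. A secondary check concerns finiteness: $\mathcal X_i$ is finite, so the relevant suprema are attained, and $\kappa<\infty$, so the bound is finite. With these in place the argument is a direct two-line computation. We also note that when $\kappa=0$ the game reduces to an exact potential game.
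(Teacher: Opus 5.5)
Your proposal is correct and follows the paper's proof: under a unilateral deviation, the change in $\E[U]$ equals the change in worker $i$'s local payoff plus a coupling residual over the at most $d_{\max}$ incident edges, each contributing at most $2\kappa$. Your explicit conditional-independence (product-measure) framing over mixed strategies is a more careful rendering of the paper's expectation notation, and it is what justifies the exact cancellation of the non-incident terms.
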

\begin{proof}
If worker $i$ unilaterally deviates from $x_i^t$ to $x_i^{t+1}$,
\begin{align}
    &\E[U(x_i^{t+1}, x_{-i}^t)] - \E[U(x_i^t, x_{-i}^t)] \notag \\
    &\quad= \E[u_i(x_i^{t+1} \mid \tau_i)] - \E[u_i(x_i^t \mid \tau_i)]
      + \Delta_i^\psi,
\end{align}
where the coupling residual $\Delta_i^\psi$ sums at most $d_{\max}$ terms
each bounded by $2\kappa$ (since $|\psi_{ij}| \le \kappa$), so
$|\Delta_i^\psi| \le 2 d_{\max}\kappa =: \eta_c$. Every unilateral
deviation thus changes the potential within $\eta_c$ of the local utility
change~\citep{candogan2011flows,christodoulou2014approximate}.
\end{proof}

A rational worker performs $\eta_c$-better-response updates:
$\E[u_i(x_i^{t+1} \mid \tau_i)] - \E[u_i(x_i^t \mid \tau_i)] > \eta_c$. If
no worker has such a deviation, the current profile is by definition
already an $\eta_c$-approximate Nash equilibrium, so the dynamics below
are well defined in all cases.

\begin{theorem}[Convergence of the Followers' Subgame]
\label{thm:nash_ideal}
Under Lemma~\ref{lemma:potential_game}, iterated
$\eta_c$-better-response updates converge in finitely many steps to a
profile $x^*(\tau)$ satisfying, for every worker $i$,
\begin{equation}
\E[g_i(x_i^*,x_{-i}^*;\tau)]\ge
\max_{x_i'\in\mathcal X_i}
\E[g_i(x_i',x_{-i}^*;\tau)]-\eta_c.
\end{equation}
Thus $x^*(\tau)$ is an $\eta_c$-approximate pure-strategy Nash equilibrium
of the explicitly defined local-payoff game.
\end{theorem}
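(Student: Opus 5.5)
The argument will use the global utility $\E[U(x)]$ as a Lyapunov (potential) function for the better-response dynamics, taking Lemma~\ref{lemma:potential_game} as the bridge between local payoff gains and potential gains.

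\textbf{Strict ascent.} Fix $\tau$ and consider any $\eta_c$-better-response step by worker $i$ from $x^t=(x_i^t,x_{-i}^t)$ to $x^{t+1}=(x_i^{t+1},x_{-i}^t)$. The decomposition in the proof of Lemma~\ref{lemma:potential_game} gives
\begin{equation*}
\E[U(x^{t+1})]-\E[U(x^t)] = \bigl(\E[u_i(x_i^{t+1}\mid\tau_i)]-\E[u_i(x_i^t\mid\tau_i)]\bigr)+\Delta_i^\psi ,
\end{equation*}
with $|\Delta_i^\psi|\le 2d_{\max}\kappa$. The update rule makes the bracketed local gain exceed $\eta_c$. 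Here $\eta_c$ must be read as the true slack bound, i.e.\ $\eta_c\ge\sup|\Delta_i^\psi|$; the lemma's $\eta_c:=2d_{\max}\kappa$ is one valid choice. With that reading, the potential strictly increases: $\E[U(x^{t+1})]>\E[U(x^t)]$.

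\textbf{Termination.} By Assumption~\ref{assum:decomposability}, each $\mathcal X_i$ is finite, so the profile space $\prod_i\mathcal X_i$ is finite. Hence $\E[U]$ takes finitely many values on profiles. I am treating a profile as the deterministic object the expectations are indexed by, the randomness of $\pllm$ being absorbed into $\E$. A strictly increasing sequence in a finite set cannot revisit a profile, so the dynamics stops after at most $|\prod_i\mathcal X_i|-1$ steps. A quantitative bound follows as well: the number of steps is at most $(\max U-\min U)/\delta$, where $\delta>0$ is the minimum positive gap among the finitely many values. The main subtlety is precisely this strictness. If one only had local gain $\ge\eta_c$ with $|\Delta|\le\eta_c$, the potential could stay flat and cycles would be possible. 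This is why the strict inequality in the update rule, combined with finiteness (rather than a uniform-gap argument), is what I would rely on.

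\textbf{Equilibrium property.} At termination, no worker has an $\eta_c$-better response. So for every $i$ and every $x_i'\in\mathcal X_i$,
\begin{equation*}
\E[u_i(x_i'\mid\tau_i)]-\E[u_i(x_i^*\mid\tau_i)]\le\eta_c .
\end{equation*}
Since $g_i(x;\tau)=u_i(x_i\mid\tau_i)$, taking the maximum over $x_i'$ yields exactly the displayed inequality. Therefore $x^*(\tau)$ is an $\eta_c$-approximate pure Nash equilibrium of the local-payoff game.
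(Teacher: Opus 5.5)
Your proposal is correct and follows the paper's own route. The potential $\E[U]$ strictly increases along every $\eta_c$-better response, because the local gain exceeds $\eta_c\ge|\Delta_i^\psi|$. Finiteness of the profile space then forces termination, and the absence of any $\eta_c$-better response at the terminal profile is exactly the stated $\eta_c$-approximate Nash condition. Two of your remarks make explicit what the paper's sketch leaves implicit: $\eta_c$ must dominate the actual coupling residual (as $2d_{\max}\kappa$ does), and the strict inequality in the update rule is what rules out cycles.
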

\begin{proof}[Proof sketch]
Each update raises the potential $\E[U(x^t)]$ by a strictly positive
amount (Lemma~\ref{lemma:potential_game}); $\mathcal{X}$ finite and $U$
bounded imply finite termination. Full proof in the supplementary material.
\end{proof}

\paragraph{Leader's objective and decomposition quality.}
The orchestrator anticipates the followers' equilibrium and solves
$\tau^*(q) \in \arg\max_{\tau} \E[U(x^*(\tau))]$. Because
$\eta_c = 2 d_{\max}(\tau)\kappa(\tau)$ depends on $\tau$, the leader's
objective contains an explicit decomposition-quality term:

\begin{corollary}[Leader's Decomposition Trade-off]
\label{cor:leader}
Let $J_{\mathrm{loc}}(\tau) := \sum_i \max_{x_i} \E[u_i(x_i \mid \tau_i)]$
and $C(\tau) := d_{\max}(\tau)\,\kappa(\tau)$. For any
$\eta_c$-approximate equilibrium $x^*(\tau)$,
\begin{equation}
\label{eq:leader_tradeoff}
    \E[U(x^*(\tau))] \;\ge\; J_{\mathrm{loc}}(\tau)
    \;-\; \tfrac{5}{2}\, N\, C(\tau).
\end{equation}
Hence the leader maximises a lower bound that trades achievable local
utility against coupling: a good decomposition simultaneously raises
$J_{\mathrm{loc}}$ and shrinks $C(\tau)$. (Proof in the supplementary material.)
\end{corollary}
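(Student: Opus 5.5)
We prove the bound by splitting $U$ via Assumption~\ref{assum:decomposability}, controlling the local part with the equilibrium condition of Theorem~\ref{thm:nash_ideal}, and controlling the coupling part with the edge count. The constant $\tfrac52$ should come out as a budget of $2$ from the equilibrium slack plus $\tfrac12$ from the coupling sum.

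\textbf{Step 1 (decomposition).} At the equilibrium $x^*=x^*(\tau)$, Assumption~\ref{assum:decomposability} gives
\begin{equation*}
\E[U(x^*)]=\sum_i \E[u_i(x_i^*\mid\tau_i)]+\sum_{(i,j)\in\mathcal E}\E[\psi_{ij}(x_i^*,x_j^*\mid\tau_i,\tau_j)].
\end{equation*}
It therefore suffices to lower-bound each of the two sums separately.

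\textbf{Step 2 (local terms).} The worker payoff is $g_i(x;\tau)=u_i(x_i\mid\tau_i)$, so the equilibrium inequality of Theorem~\ref{thm:nash_ideal} reads
\begin{equation*}
\E[u_i(x_i^*\mid\tau_i)]\ge \max_{x_i'\in\mathcal X_i}\E[u_i(x_i'\mid\tau_i)]-\eta_c .
\end{equation*}
Here I take $\E[u_i(x_i\mid\tau_i)]$ for a fixed action $x_i$ to mean the expectation over the stochastic generation. The right-hand maximum then does not depend on $x_{-i}^*$, so it equals the $i$-th summand of $J_{\mathrm{loc}}(\tau)$. Summing over $i$ and using Lemma~\ref{lemma:potential_game}, $\eta_c\le 2C(\tau)$, we get
\begin{equation*}
\sum_i\E[u_i(x_i^*\mid\tau_i)]\ge J_{\mathrm{loc}}(\tau)-2N\,C(\tau).
\end{equation*}

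\textbf{Step 3 (coupling terms).} Each edge in $\mathcal E$ is counted once, and node $i$ has degree $|\mathcal N_i|\le d_{\max}$. The handshake identity gives
\begin{equation*}
|\mathcal E|=\tfrac12\sum_i|\mathcal N_i|\le \tfrac12 N d_{\max}.
\end{equation*}
Since $|\psi_{ij}|\le\kappa$ pointwise, the coupling sum is at least $-\tfrac12 N d_{\max}\kappa=-\tfrac12 N C(\tau)$. Adding Steps 2 and 3 yields the claimed $J_{\mathrm{loc}}-\tfrac52 NC$.

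\textbf{Main obstacle.} The delicate point is Step 2. The statement says ``for any $\eta_c$-approximate equilibrium'', and we must read this as the local-payoff equilibrium of Theorem~\ref{thm:nash_ideal}, not as approximate optimality of the potential.

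If instead $x^*$ were only known to have no potential-improving deviation larger than $\eta_c$, we would have to transfer through Lemma~\ref{lemma:potential_game}. That costs an extra $\eta_c$ per worker, giving $4NC$ rather than $2NC$, and the stated constant would fail. I would therefore fix the local-payoff definition explicitly.

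I would also check that the lemma's slack bound $\eta_c\le 2d_{\max}\kappa$ is the value used in the equilibrium condition. If one used the sharper deviation slack instead, the constant could only improve, so the stated inequality remains valid.
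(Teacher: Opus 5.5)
Your proof is correct and is essentially the paper's argument. The local-payoff equilibrium condition of Theorem~\ref{thm:nash_ideal} costs at most $N\eta_c\le 2NC(\tau)$ against $J_{\mathrm{loc}}(\tau)$, and the handshake bound $|\mathcal E|\le \tfrac12 N d_{\max}$ limits the coupling loss to $\tfrac12 N C(\tau)$, which is exactly where the constant $\tfrac52$ comes from; you are also right that a potential-based reading of the equilibrium would worsen the constant, so fixing the local-payoff definition is the appropriate choice.
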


\subsection{Dual-Memory Drift Dynamics and Hallucination Floors}
\label{sec:memory_driven}

LLM weights are frozen, so adaptation proceeds by editing external,
non-parametric memories: an execution memory $\me\in\mathcal M_e$ shared by
workers and a strategy memory $\mo\in\mathcal M_o$ used by the orchestrator.
For a fixed decomposition $\tau$, let
\[
J_i(\me\mid\tau_i)=
\E_{x_i\sim\pllm(\cdot\mid\tau_i,\me)}[u_i(x_i\mid\tau_i)]
\]
and rescale utility so that the sub-optimality
$V_t:=V_i(\me^t)=J_i^*-J_i(\me^t\mid\tau_i)$ lies in $[0,1]$.
Let $\mathcal F_t$ denote the history up to the $t$-th memory update.

The key distinction is whether a proposed reflection is committed
unconditionally or evaluated before it enters memory. Unconditional
commitment alone does \emph{not} imply a positive asymptotic error: a
universal lower bound requires an explicit condition that harmful
commitments keep injecting non-vanishing expected error. We therefore
separate an upper guarantee, its worst-case tightness, and a genuine
lower bound under persistent harmful drift.

\subsubsection{Regime A: free-form reflection.}
When every generated reflection is appended, corrective information and
hallucinated information~\citep{huang2025hallucination,ji2024hallucination}
are mixed in the same update. We summarise their
net conditional effect by the following one-sided drift condition.

\begin{assumption}[One-Sided Free-Form Drift]
\label{assum:lyapunov_drift}
There exist $\gamma_t\in(0,1]$ and $\nu_t\ge 0$ such that
\begin{equation}
\label{eq:drift_t}
    \E[V_{t+1}\mid\mathcal F_t]
    \le (1-\gamma_t)V_t+\nu_t.
\end{equation}
Here $\gamma_t V_t$ is the available corrective drift and $\nu_t$ is the
mean residual error load from committed, ungrounded content; $\nu_t$ is a
first-moment quantity, not a variance.
\end{assumption}

\begin{theorem}[Finite-Time Upper Bound]
\label{thm:regimeA-upper}
If $\gamma_t\ge\underline\gamma>0$ and
$\nu_t\le\overline\nu$, then, for $e_t:=\E[V_t]$,
\begin{equation}
\label{eq:free_upper}
 e_T\le (1-\underline\gamma)^T e_0
 +\frac{\overline\nu}{\underline\gamma}
  \bigl(1-(1-\underline\gamma)^T\bigr).
\end{equation}
Consequently,
$\limsup_{T\to\infty}e_T\le
\min\{1,\overline\nu/\underline\gamma\}$. (Proof in the supplementary material.)
\end{theorem}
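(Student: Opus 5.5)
The plan is to take expectations in the one-sided drift condition of Assumption~\ref{assum:lyapunov_drift} and unroll the resulting deterministic linear recursion. By the tower property, $e_{t+1}=\E[\E[V_{t+1}\mid\mathcal F_t]]\le \E[(1-\gamma_t)V_t]+\E[\nu_t]$. The only subtlety is that $\gamma_t$ and $\nu_t$ may be $\mathcal F_t$-measurable random quantities rather than constants. This is handled by the uniform bounds. Since $V_t\ge 0$ and $\gamma_t\ge\underline\gamma$, we have $(1-\gamma_t)V_t\le(1-\underline\gamma)V_t$ pointwise, and $\nu_t\le\overline\nu$ pointwise. Hence
\[
e_{t+1}\le (1-\underline\gamma)\,e_t+\overline\nu .
\]
This first step, replacing the random coefficients by deterministic ones, is the main point of care: it uses nonnegativity of $V_t$ together with $1-\gamma_t\in[0,1)$. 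Integrability poses no problem because $V_t\in[0,1]$.

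Next, write $\rho:=1-\underline\gamma\in[0,1)$ and unroll by induction on $T$. The claim to prove is
\[
e_T\le\rho^T e_0+\overline\nu\sum_{k=0}^{T-1}\rho^k .
\]
The base case $T=0$ is trivial. For the inductive step, apply the recursion and use $\rho\ge0$, which preserves the inequality under multiplication. Summing the geometric series gives $\sum_{k=0}^{T-1}\rho^k=(1-\rho^T)/\underline\gamma$, which is exactly \eqref{eq:free_upper}. Equivalently, one can compare with the fixed point $\overline\nu/\underline\gamma$: the shifted quantity $e_t-\overline\nu/\underline\gamma$ contracts by the factor $\rho$ at each step.

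For the asymptotic statement, note that $\rho^T\to0$ because $\underline\gamma>0$. Taking $\limsup$ in \eqref{eq:free_upper} therefore gives $\limsup_T e_T\le\overline\nu/\underline\gamma$. Separately, $V_t\in[0,1]$ by the rescaling, so $e_T\le1$ for all $T$. Combining the two bounds yields $\min\{1,\overline\nu/\underline\gamma\}$.

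No step is genuinely hard. The only place needing a sentence of justification is the measurability and sign argument in the first step.
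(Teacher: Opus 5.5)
Your proposal is correct and follows essentially the same route as the paper: take total expectations in the drift condition, replace $\gamma_t,\nu_t$ by the uniform bounds $\underline\gamma,\overline\nu$ using $V_t\ge 0$, unroll the affine recursion $e_{t+1}\le(1-\underline\gamma)e_t+\overline\nu$, and obtain the cap at $1$ from $V_t\in[0,1]$. One small remark: the sign of $1-\gamma_t$ plays no role in the first step, and the nonnegativity that actually matters is $1-\underline\gamma\ge 0$, which you correctly use when multiplying through the inductive hypothesis.
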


Theorem~\ref{thm:regimeA-upper} is an upper guarantee only; the next
result is the strongest conclusion available from
Assumption~\ref{assum:lyapunov_drift} alone.

\begin{proposition}[Worst-Case Tightness]
\label{prop:regimeA-tight}
For every $\gamma\in(0,1]$ and $\nu\in(0,\gamma]$, there exists a free-form
process satisfying Assumption~\ref{assum:lyapunov_drift} with
$\gamma_t\equiv\gamma$ and $\nu_t\equiv\nu$ such that
\begin{equation}
\label{eq:minimax_tight}
    \lim_{T\to\infty}\E[V_T]=\frac{\nu}{\gamma}.
\end{equation}
Hence the upper bound $\nu/\gamma$ cannot be uniformly improved over the
one-sided drift class.
\end{proposition}
\begin{proof}[Proof sketch]
The deterministic recursion $V_{t+1}=(1-\gamma)V_t+\nu$ with
$0<\nu\le\gamma$ maps $[0,1]$ into itself, attains \eqref{eq:drift_t} with
equality, and converges to its unique fixed point $\nu/\gamma$.
\end{proof}

A lower bound that applies to \emph{every} process requires a lower drift
condition, directly testable by regressing the next-step error on the
current error in free-form trajectories.

\begin{assumption}[Persistent Harmful Commitment]
\label{assum:persistent_noise}
There exist $\overline\gamma\in(0,1]$ and
$\underline\nu\in(0,\overline\gamma]$ such that, on every reachable state,
\begin{equation}
\label{eq:lower_drift}
    \E[V_{t+1}\mid\mathcal F_t]
    \ge (1-\overline\gamma)V_t+\underline\nu.
\end{equation}
The parameter $\overline\gamma$ upper-bounds how much of the current error
can be removed in one expected update, whereas $\underline\nu>0$ is a
persistent net error load that remains because harmful reflections are
committed without screening.
\end{assumption}

\begin{theorem}[Universal Lower Bound]
\label{thm:regimeA-lower}
Under Assumption~\ref{assum:persistent_noise},
\begin{equation}
\label{eq:free_lower}
 e_T\ge (1-\overline\gamma)^T e_0
 +\frac{\underline\nu}{\overline\gamma}
  \bigl(1-(1-\overline\gamma)^T\bigr),
\end{equation}
and therefore
\begin{equation}
\label{eq:positive_floor}
    \liminf_{T\to\infty}e_T
    \ge \frac{\underline\nu}{\overline\gamma}>0.
\end{equation}
(Proof in the supplementary material.)
\end{theorem}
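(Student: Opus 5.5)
The plan is to mirror the proof of Theorem~\ref{thm:regimeA-upper}, with the inequality reversed. First, take total expectations in \eqref{eq:lower_drift}. Since the bound holds on every reachable state, it holds almost surely along the process. The tower property together with linearity then gives the scalar recursion
\begin{equation*}
e_{t+1}=\E\bigl[\E[V_{t+1}\mid\mathcal F_t]\bigr]\ge(1-\overline\gamma)e_t+\underline\nu ,
\end{equation*}
valid for all $t\ge 0$. Integrability is automatic because $V_t\in[0,1]$.

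Second, unroll this recursion by induction on $T$. The key point is that the map $y\mapsto(1-\overline\gamma)y+\underline\nu$ is monotone nondecreasing, since $\overline\gamma\le 1$. So if $e_T\ge L_T$, where
\begin{equation*}
L_T:=(1-\overline\gamma)^Te_0+\underline\nu\sum_{k=0}^{T-1}(1-\overline\gamma)^k,
\end{equation*}
then $e_{T+1}\ge(1-\overline\gamma)e_T+\underline\nu\ge(1-\overline\gamma)L_T+\underline\nu=L_{T+1}$. The base case $T=0$ is trivial. Summing the geometric series gives \eqref{eq:free_lower}. The edge case $\overline\gamma=1$ is handled consistently: there $(1-\overline\gamma)^T=0$ for $T\ge1$, and the formula reads $e_T\ge\underline\nu$.

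Third, pass to the limit. Because $0<\overline\gamma\le1$, we have $(1-\overline\gamma)^T\to0$ (or it is eventually $0$). Also $e_0\ge0$, so the first term of \eqref{eq:free_lower} is nonnegative and can be dropped. Taking $\liminf$ then yields \eqref{eq:positive_floor}, and the bound is strictly positive since $\underline\nu>0$.

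There is no serious obstacle here. The only point requiring care is the first step, namely justifying that the conditional inequality may be integrated. This uses the hypothesis that it holds on every reachable state, hence almost surely. The condition $\underline\nu\le\overline\gamma$ is not needed for the inequality itself. It only ensures the floor $\underline\nu/\overline\gamma\le1$ is consistent with $V_t\in[0,1]$, so the assumption is non-vacuous. I would remark on this rather than use it.
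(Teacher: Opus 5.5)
Your proposal is correct and follows the same route as the paper's argument, which reverses the affine-recursion proof of Theorem~\ref{thm:regimeA-upper}: integrate the conditional lower drift \eqref{eq:lower_drift} via the tower property to get $e_{t+1}\ge(1-\overline\gamma)e_t+\underline\nu$, unroll it using monotonicity of the affine map (valid since $\overline\gamma\le 1$) and the geometric sum, and let $(1-\overline\gamma)^T\to0$ to obtain the floor $\underline\nu/\overline\gamma$. Your remark is also accurate: the condition $\underline\nu\le\overline\gamma$ serves only to keep the assumption consistent with $V_t\in[0,1]$ and is not used in the inequality itself.
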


\begin{corollary}[Two-Sided Error Tube]
\label{cor:free_tube}
If Assumptions~\ref{assum:lyapunov_drift} and
\ref{assum:persistent_noise} both hold, then
\begin{equation}
\frac{\underline\nu}{\overline\gamma}
\le \liminf_{T\to\infty}e_T
\le \limsup_{T\to\infty}e_T
\le \frac{\overline\nu}{\underline\gamma}.
\end{equation}
When the two conditional drift bounds match,
$\underline\gamma=\overline\gamma=\gamma$ and
$\underline\nu=\overline\nu=\nu$, the mean error converges exactly to
$\nu/\gamma$.
\end{corollary}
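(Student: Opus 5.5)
The plan is to read Corollary~\ref{cor:free_tube} as a direct combination of Theorem~\ref{thm:regimeA-upper} and Theorem~\ref{thm:regimeA-lower}. We interpret ``Assumptions~\ref{assum:lyapunov_drift} and \ref{assum:persistent_noise} both hold'' as including the uniform constants $\gamma_t\ge\underline\gamma>0$ and $\nu_t\le\overline\nu$ required by Theorem~\ref{thm:regimeA-upper}. Under that reading, the two finite-time envelopes \eqref{eq:free_upper} and \eqref{eq:free_lower} hold simultaneously for every $T$. The inequality $\liminf_T e_T\le\limsup_T e_T$ is automatic.

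We then pass to the limit in each envelope. In \eqref{eq:free_upper}, since $\underline\gamma\in(0,1]$, we have $(1-\underline\gamma)^T\to0$. Because $e_0\in[0,1]$ is finite, the right-hand side converges to $\overline\nu/\underline\gamma$, which gives $\limsup_T e_T\le\overline\nu/\underline\gamma$. The sharper cap by $1$ stated in Theorem~\ref{thm:regimeA-upper} is not needed here. Symmetrically, in \eqref{eq:free_lower}, $(1-\overline\gamma)^T\to0$, so the right-hand side tends to $\underline\nu/\overline\gamma$. Taking $\liminf$ of both sides yields $\liminf_T e_T\ge\underline\nu/\overline\gamma$, which is \eqref{eq:positive_floor}. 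Chaining the two gives the displayed tube.

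For the matching case, we set $\underline\gamma=\overline\gamma=\gamma$ and $\underline\nu=\overline\nu=\nu$. The outer bounds of the tube then coincide at $\nu/\gamma$, so $\liminf$ and $\limsup$ agree and $e_T\to\nu/\gamma$. In fact, the two finite-time envelopes coincide for every $T$, so $e_T=(1-\gamma)^Te_0+\tfrac{\nu}{\gamma}(1-(1-\gamma)^T)$ exactly. This closed form gives the convergence and its geometric rate directly.

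The only point needing care is consistency. Because $V_t\in[0,1]$, the tube is meaningful only if $\underline\nu/\overline\gamma\le\min\{1,\overline\nu/\underline\gamma\}$. We do not need to prove this separately. If both assumptions are satisfied by an actual process, the derived inequalities themselves force this ordering, and if they are jointly infeasible the statement holds vacuously. We therefore expect no real obstacle beyond checking that the limits are taken on the correct sides: $\limsup$ on the upper envelope and $\liminf$ on the lower one.
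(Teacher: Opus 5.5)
Your proposal is correct and takes the route the paper intends: take $\limsup$ in the finite-time upper envelope of Theorem~\ref{thm:regimeA-upper} and $\liminf$ in the lower envelope of Theorem~\ref{thm:regimeA-lower}, and in the matched case the two bounds collapse to $\nu/\gamma$. You are also right that the uniform constants $\underline\gamma,\overline\nu$ from Theorem~\ref{thm:regimeA-upper} must be read as implicit hypotheses, and that in the matched case the two envelopes coincide, giving the exact closed form for $e_T$ for every $T$.
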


An operational corollary in the supplementary material re-expresses the
tube via estimable per-step correction and harm rates.

\paragraph{Leader's outer loop.}
On the slower timescale, define
$V_o(\mo^k)=\Phi^*-\Phi(\mo^k)$ for
$\Phi(\mo^k)=\E[U(x)\mid\me^\infty(\tau(\mo^k))]$.
Under the analogous upper drift condition with
$\gamma_o^k\ge\gamma_{o,\min}>0$ and $\nu_{o,k}\le\nu_{o,\max}$, the
same affine recursion yields the finite-episode bound
$\E[V_o(\mo^K)]\le(1-\gamma_{o,\min})^K V_o(\mo^0)
+\nu_{o,\max}/\gamma_{o,\min}$. We use only this finite-episode
statement and make no asymptotic leader-regret claim.

\subsection{Why Grounding Is Necessary: Impossibility of Self-Contained Gates}
\label{sec:impossibility}

The fundamental informational requirement is \emph{grounding}: access to a
signal whose law depends on the environment rather than only on the
generated transcript. We formalise this through a pair of environments
that are indistinguishable at the text level.

\paragraph{Text processes and gates.}
Let a memory be a finite reflection sequence
$m=(c_1,\ldots,c_t)\in\mathcal C^*$ with append operation
$m\oplus c=(c_1,\ldots,c_t,c)$. Fix an initial memory $m^0$ and a proposal
kernel $P(\cdot\mid m)$ over $\mathcal C$. An environment $\varepsilon$
assigns a sub-optimality $V^\varepsilon(m)\in[0,1]$ to every reachable
memory. Across the class considered below, the environment changes this
semantic value but not the proposal kernel or any other text-level law.

\begin{definition}[Self-Contained Gate]
\label{def:selfgate}
A \emph{self-contained gate} is any possibly randomised,
history-dependent acceptance rule measurable with respect to the generated
text process and its internal randomness only. A \emph{grounded gate} may
additionally observe an environment-dependent signal, such as realised
reward, simulator state, test execution, or a formal-checker result.
\end{definition}

This class contains text-only LLM-as-judge systems. Correlated-evaluation
bias can make such judges weaker in practice
\citep{panickssery2024llmevaluators}; the result below applies even to an
ideal gate with unlimited text-processing capacity.

\paragraph{Ambiguous-pair construction.}
Let $C_0,C_1\subset\mathcal C$ be disjoint and satisfy
$P(C_0\mid m)=P(C_1\mid m)=\mu\in(0,\tfrac12]$ for every reachable $m$;
all remaining proposals are inert. Fix $\kappa\in(0,1)$ and define
$f_0(v)=(1-\kappa)v$ and
$f_1(v)=\kappa+(1-\kappa)v$. In environment $\envp$, an accepted proposal
from $C_a$ applies $f_a$ to the current error; in environment $\envm$, the
roles of $C_0$ and $C_1$ are swapped. Thus the same text is corrective in
one environment and harmful in the other. Let
$e_T^\varepsilon=\E[V^\varepsilon(m^T)]$ and assume both environments start
at the same $e_0\le\tfrac12$.

\begin{theorem}[Self-Gating Impossibility]
\label{thm:selfgate}
For every self-contained gate and every horizon $T$,
\begin{equation}
\label{eq:selfgate_floor}
 \max\{e_T^{\envp},e_T^{\envm}\}\ge e_0.
\end{equation}
Moreover, if $e_0<\tfrac12$ and the gate accepts at least one proposal from
$C_0\cup C_1$ with positive probability by time $T$, then the inequality is
strict. In contrast, the free-form rule accepts everything and satisfies
$e_T^\varepsilon\to\tfrac12$ in both environments, whereas the grounded
gate that observes $V^\varepsilon$ accepts only the corrective class and
satisfies $e_T^\varepsilon=e_0(1-\kappa\mu)^T\to0$ in both environments.
\end{theorem}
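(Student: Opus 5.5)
The plan is a coupling argument: run both environments on the same probability space and show that a self-contained gate produces the same law of text histories in each. The proposal kernel $P$, the initial memory $m^0$, and all text-level laws are environment-independent. A self-contained gate is measurable with respect to the text process and its internal randomness only. So by induction on $t$, the joint law of (proposals, gate decisions, memories $m^t$) is identical under $\envp$ and $\envm$. The environment enters only through the semantic map $V^\varepsilon$ evaluated on that common sequence.

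\textbf{Step 1: a pathwise sum identity.} Fix a realised accepted sequence. Let $a_t\in\{0,1,\text{inert}\}$ be the class of the $t$-th accepted non-inert proposal. Under $\envp$, error $v$ evolves by $f_{a_t}$; under $\envm$, it evolves by $f_{1-a_t}$. Set $s=v^{+}+v^{-}$, the sum of the two environments' errors along the same path. Both maps $f_0,f_1$ have linear part $(1-\kappa)v$, and exactly one of the two environments applies the $+\kappa$ shift. Hence each accepted non-inert step gives
\[
s\mapsto(1-\kappa)s+\kappa,
\]
while inert or rejected steps leave $s$ unchanged. Writing $w=s-1$, the step becomes $w\mapsto(1-\kappa)w$. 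Since $s_0=2e_0\le 1$ gives $w_0\le 0$, we get
\[
s_T=1-(1-\kappa)^{K}(1-2e_0),
\]
where $K$ is the number of accepted non-inert proposals by time $T$. In particular $s_T\ge 2e_0$ pathwise, with strict inequality whenever $K\ge1$ and $e_0<\tfrac12$. I need to check that the per-environment values stay in $[0,1]$; they do, since $f_0$ and $f_1$ map $[0,1]$ into itself. I also need that the paths start at a common $V^{\varepsilon}(m^0)$ equal to $e_0$, or at least sum to $2e_0$ in expectation. If the initial value is random, the argument goes through conditionally, since the affine map is linear in $s$.

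\textbf{Step 2: take expectations.} Because the laws of the path and of $K$ coincide across environments, we get $e_T^{\envp}+e_T^{\envm}=\E[s_T]\ge 2e_0$. Hence the maximum of the two is at least $e_0$. If $\Pr(K\ge1)>0$ and $e_0<\tfrac12$, then $\E[s_T]>2e_0$, which gives strictness. This is exactly the positive-acceptance hypothesis in the theorem.

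\textbf{Step 3: the two comparators.} For the free-form rule, every step draws from $C_0$ with probability $\mu$, from $C_1$ with probability $\mu$, and is otherwise inert. Since the maps are affine, taking expectations gives
\[
e_{t+1}=(1-2\mu)e_t+\mu\bigl((1-\kappa)e_t+(1-\kappa)e_t+\kappa\bigr).
\]
The fixed point of this linear recursion is $\tfrac12$, and its contraction factor is $1-2\mu\kappa<1$, so $e_T\to\tfrac12$. The same recursion holds in both environments by symmetry. For the grounded gate, it accepts exactly the proposals that apply $f_0$-type correction, which arrive with probability $\mu$ per step. This gives $e_{t+1}=(1-\mu)e_t+\mu(1-\kappa)e_t=(1-\kappa\mu)e_t$, and hence $e_T=e_0(1-\kappa\mu)^T$. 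One small point: the gate can identify the corrective class by observing whether $V^\varepsilon$ decreases. This works if "observes $V^\varepsilon$" allows evaluating the candidate before commitment, and that is how I read the statement.

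The main obstacle is Step 1's measurability claim: establishing rigorously that the text-process law, including the gate's history-dependent randomised decisions, is environment-invariant. I would handle this by an explicit inductive construction on a common probability space carrying the proposal draws and the gate's internal randomness. Once that is in place, the sum identity is routine.
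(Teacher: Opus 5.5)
Your proposal is correct and follows essentially the paper's route. You couple both environments through shared proposal and gate randomness, so the accepted class-label sequence coincides; a pathwise symmetry between the two error trajectories then gives $e_T^{\envp}+e_T^{\envm}\ge 2e_0$, strictly when $e_0<\tfrac12$ and $\Pr(K\ge1)>0$, and the comparators follow from the induced affine recursions. Your recursion $s\mapsto(1-\kappa)s+\kappa$ for the summed error is the additive form of the paper's reflection identity $f_{1-a}(v)=1-f_a(1-v)$, and it gives the closed form $s_T=1-(1-\kappa)^K(1-2e_0)$ directly.
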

\begin{proof}[Proof sketch]
Couple both environments with shared proposal and gate randomness; the
accepted class-label sequence is then identical under $\envp$ and
$\envm$, and the reflection identity $f_{1-a}(v)=1-f_a(1-v)$ yields
$e_T^{\envp}+e_T^{\envm}\ge2e_0$, strictly when $e_0<\tfrac12$ and an
ambiguous proposal is accepted with positive probability. The free-form
and grounded rates follow from the induced affine recursions. Full proof
in the supplementary material.
\end{proof}

\begin{remark}[Scope of the impossibility result]
\label{rem:selfgate_scope}
The theorem is minimax over text-indistinguishable environments; textual
self-evaluation remains useful when the transcript itself certifies
correctness (a fully checkable proof). When truth depends on external
state---hidden caps, API responses, simulator state, an evolving
repository---judge capacity cannot substitute for grounding.
\end{remark}

\subsection{SRMA: Verifier-Gated Reflection}
\label{sec:srma}

Theorem~\ref{thm:selfgate} establishes why the gate must have access to an
environment-separating signal. Exact convergence additionally requires the
gate to compare a fixed error functional of the memory state, rather than
two uncontrolled one-shot samples from a stochastic generator. We therefore
separate the stochastic \emph{proposal} mechanism from the grounded
\emph{evaluation} protocol.

\begin{definition}[Verifier and Evaluation Risk]
\label{def:verifier}
A verifier is a deterministic map
$\mathcal V:\mathcal X\times\mathcal T\to\mathcal S$ with deterministic
score $\rho:\mathcal S\to[0,1]$. Let
$g_i:\mathcal T_i\times\mathcal M_e\to\mathcal X_i$ be a fixed
deterministic evaluation protocol, such as an exact planner or decoding
with fixed randomness. The verifier risk of memory $\me$ is
\begin{equation}
\label{eq:verifier_risk}
 R_i(\me):=\rho\!\left(\mathcal V(g_i(\tau_i,\me),\tau_i)\right).
\end{equation}
The pair $(\mathcal V,g_i)$ is fixed independently of the reflection
proposal distribution. It is \emph{grounded} when its score depends on an
environment signal that is not determined by the generated transcript
alone. Grounding supplies information; calibration below connects the score
to task utility.
\end{definition}

\begin{definition}[Verifier-Gated SRMA Update]
\label{def:gated_update}
Given $\me^t$, compute the evaluation output
$x_i^t=g_i(\tau_i,\me^t)$ and diagnostic
$s_i^t=\mathcal V(x_i^t,\tau_i)$. Sample a reflection
$c_i^{t+1}\sim\pllm(\cdot\mid x_i^t,s_i^t,\tau_i,\me^t)$ and form
$\widetilde\me^{t+1}=\mathcal M_e(\me^t,c_i^{t+1})$. Accept iff
\begin{equation}
\label{eq:gate}
    R_i(\widetilde\me^{t+1})<R_i(\me^t).
\end{equation}
On acceptance set $\me^{t+1}=\widetilde\me^{t+1}$; otherwise retain
$\me^{t+1}=\me^t$.
\end{definition}

Gating on two stochastic one-shot outputs would not suffice: sample
variation could accept a memory with worse expected performance. Exact
guarantees therefore assume deterministic or exact expected-risk
evaluation; a finite-sample extension follows below.

\begin{assumption}[Verifier Calibration]
\label{assum:admissible}
There exists $L<\infty$ such that, for every reachable memory,
\begin{equation}
\label{eq:calibration}
    V_i(\me)\le L R_i(\me).
\end{equation}
Thus zero verifier risk certifies zero task sub-optimality. This assumption
is appropriate for exact value tables and complete formal checkers; on
incomplete test suites, our theorem concerns verifier risk only.
\end{assumption}

\begin{assumption}[Non-Degenerate Corrective Mass]
\label{assum:acceptance}
There exist $c_1\in(0,1]$ and $\beta\in[0,1]$ such that, whenever
$R_t:=R_i(\me^t)>0$,
\begin{equation}
 p_t:=\Prob[\mathrm{accept}\mid\mathcal F_t]\ge c_1 R_t^\beta.
\end{equation}
\end{assumption}

\begin{assumption}[Proportional Accepted Decrement]
\label{assum:decrement}
There exists $c_2\in(0,1]$ such that
\begin{equation}
\E[R_t-R_{t+1}\mid\mathcal F_t,\mathrm{accept}]
\ge c_2 R_t.
\end{equation}
\end{assumption}

\begin{proposition}[Monotone Multiplicative Drift]
\label{prop:drift_decomposition}
Under Definition~\ref{def:gated_update}, $R_{t+1}\le R_t$ almost surely and
\begin{gather}
\label{eq:drift_decomp}
 \E[R_{t+1}\mid\mathcal F_t]=R_t-p_t\Delta_t,\\
 \Delta_t:=\E[R_t-R_{t+1}\mid\mathcal F_t,\mathrm{accept}].\notag
\end{gather}
Under Assumptions~\ref{assum:acceptance}--\ref{assum:decrement}, with
$c=c_1c_2$,
\begin{equation}
\label{eq:mult_drift}
 \E[R_{t+1}\mid\mathcal F_t]
 \le R_t-cR_t^{1+\beta}.
\end{equation}
\end{proposition}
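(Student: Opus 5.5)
The plan has three steps: establish monotonicity directly from the gate, obtain the drift identity by conditioning on the acceptance event, and then insert the two lower bounds from Assumptions~\ref{assum:acceptance} and~\ref{assum:decrement}.

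\textbf{Monotonicity.} By Definition~\ref{def:gated_update}, there are two possibilities at each step. On acceptance, $\me^{t+1}=\widetilde\me^{t+1}$, and the gate~\eqref{eq:gate} gives $R_{t+1}=R_i(\widetilde\me^{t+1})<R_t$. On rejection, $\me^{t+1}=\me^t$, so $R_{t+1}=R_t$. Hence $R_{t+1}\le R_t$ pointwise, and in particular almost surely. Because the evaluation protocol $g_i$ and the verifier $\mathcal V$ are deterministic, $R_i$ is a fixed function of memory. This is what makes the comparison legitimate and ensures no sampling noise enters $R_{t+1}$ beyond the randomness of the proposal.

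\textbf{Drift identity.} Let $A$ be the acceptance event, with $p_t=\Prob[A\mid\mathcal F_t]$. Since $R_t$ is $\mathcal F_t$-measurable and $R_t-R_{t+1}=0$ on $A^c$, the law of total expectation gives
\begin{equation*}
\E[R_t-R_{t+1}\mid\mathcal F_t]=p_t\,\E[R_t-R_{t+1}\mid\mathcal F_t,A]+(1-p_t)\cdot 0=p_t\Delta_t .
\end{equation*}
This is~\eqref{eq:drift_decomp}. When $p_t=0$, the conditional expectation $\Delta_t$ is undefined, and I would adopt the convention $p_t\Delta_t:=0$, equivalently writing the identity as $\E[(R_t-R_{t+1})\mathbf 1_A\mid\mathcal F_t]$. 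This degenerate case is the only delicate point.

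\textbf{Multiplicative drift.} I split on whether $R_t$ vanishes.
\begin{itemize}
\item If $R_t=0$, monotonicity together with $R\ge 0$ forces $R_{t+1}=0$. Both sides of~\eqref{eq:mult_drift} are then zero; when $\beta=0$, read $0^{1+\beta}=0$.
\item If $R_t>0$, Assumption~\ref{assum:acceptance} gives $p_t\ge c_1R_t^\beta>0$, so $\Delta_t$ is well defined. Assumption~\ref{assum:decrement} gives $\Delta_t\ge c_2R_t\ge 0$. Since both factors are nonnegative, the product inequality $p_t\Delta_t\ge c_1c_2R_t^{1+\beta}$ holds. Substituting into~\eqref{eq:drift_decomp} yields~\eqref{eq:mult_drift} with $c=c_1c_2$.
\end{itemize}

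None of these steps is a real obstacle; the argument is essentially bookkeeping. The main care needed is in handling the conditioning on a possibly null acceptance event and in checking nonnegativity before multiplying the two lower bounds.
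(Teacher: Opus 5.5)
Your proof is correct and follows the natural argument the paper relies on (its own proof is deferred to the supplement and not shown here). The steps are: monotonicity from the strict-decrease gate, with rejection leaving the memory and hence $R$ unchanged; the identity via total expectation over the acceptance event; then substituting $p_t\ge c_1R_t^\beta$ and $\Delta_t\ge c_2R_t$, with your handling of the $R_t=0$ case and of the $p_t=0$ convention tidying up points the paper leaves implicit.
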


\begin{theorem}[Exact Verifier Convergence and Rates]
\label{thm:gated_rates}
Under Assumptions~\ref{assum:acceptance}--\ref{assum:decrement}, let
$r_t:=\E[R_t]$ and $c=c_1c_2$. Then $R_t\to0$ almost surely and
\begin{align}
\beta=0:\quad &r_T\le(1-c)^T r_0,
&&\text{(geometric)},\label{eq:rate_geo}\\
\beta\in(0,1]:\quad &r_T\le
\bigl(r_0^{-\beta}+c\beta T\bigr)^{-1/\beta},
&&\text{(polynomial)}.\label{eq:rate_poly}
\end{align}
If Assumption~\ref{assum:admissible} also holds, then
$\E[V_i(\me^T)]\le Lr_T$ and hence the task sub-optimality converges to zero
at the same rate up to the factor $L$.
\end{theorem}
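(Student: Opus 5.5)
The argument starts from the one-step drift of Proposition~\ref{prop:drift_decomposition}, $\E[R_{t+1}\mid\mathcal F_t]\le R_t-cR_t^{1+\beta}$, together with the almost-sure monotonicity $R_{t+1}\le R_t$. All three conclusions follow from this inequality.

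\textbf{Almost-sure convergence.} Since $R_t\in[0,1]$ is nonincreasing almost surely, it converges pathwise to some limit $R_\infty\ge0$. Taking expectations of the drift inequality and summing over $t$ gives
\[
c\sum_t\E[R_t^{1+\beta}]\le r_0\le 1 .
\]
Monotone convergence then yields $\sum_t R_t^{1+\beta}<\infty$ almost surely, so $R_t\to0$ and $R_\infty=0$ almost surely.

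\textbf{Geometric regime ($\beta=0$).} Taking expectations gives the linear recursion $r_{t+1}\le(1-c)r_t$. Iterating yields \eqref{eq:rate_geo}. Since $c\le1$, the factor $1-c$ is nonnegative, so the iteration is legitimate.

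\textbf{Polynomial regime ($\beta\in(0,1]$).} The function $x\mapsto x^{1+\beta}$ is convex, so Jensen gives $\E[R_t^{1+\beta}]\ge r_t^{1+\beta}$. Hence $r_{t+1}\le r_t-cr_t^{1+\beta}=:\phi(r_t)$. I would not iterate $\phi$ directly, since it need not be monotone. Instead I would work with the reciprocal power $r_t^{-\beta}$ and show
\[
r_{t+1}^{-\beta}\ge r_t^{-\beta}+c\beta .
\]
If $r_{t+1}=0$ the claim is trivial. Otherwise $1-cr_t^\beta\in(0,1]$, and we have $r_{t+1}^{-\beta}\ge r_t^{-\beta}(1-cr_t^\beta)^{-\beta}$. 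Bernoulli's inequality $(1-y)^{-\beta}\ge1+\beta y$ for $y\in[0,1)$ and $\beta>0$ then gives exactly the increment $c\beta$. Telescoping produces $r_T^{-\beta}\ge r_0^{-\beta}+c\beta T$, which is \eqref{eq:rate_poly}.

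This is the step where care is needed. Both the Jensen step, which moves from pathwise drift to a bound on the mean, and the inversion must point in the correct direction. One must also confirm $1-cr_t^\beta>0$. This holds whenever $r_t>0$ and $r_{t+1}>0$, because $r_{t+1}\le r_t(1-cr_t^\beta)$.

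\textbf{Task sub-optimality.} Under Assumption~\ref{assum:admissible}, $V_i(\me^T)\le LR_i(\me^T)$ holds pointwise on reachable memories. Taking expectations gives $\E[V_i(\me^T)]\le Lr_T$, so the task sub-optimality inherits both rates up to the factor $L$.
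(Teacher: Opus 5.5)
Your proposal is correct and follows essentially the same route as the paper: take expectations in the multiplicative drift, apply Jensen to $z\mapsto z^{1+\beta}$ to get $r_{t+1}\le r_t-cr_t^{1+\beta}$, solve this recursion, and transfer the bound through calibration. The differences are minor. You get $R_t\to0$ almost surely from the summability bound $c\sum_t\E[R_t^{1+\beta}]\le r_0$, whereas the paper argues that the monotone limit $R_\infty$ satisfies $\E[R_\infty]\le r_t\to0$; and your Bernoulli-inequality step on $r_t^{-\beta}$ spells out what the paper's sketch calls the standard variable-drift comparison, correctly avoiding any need for $\phi$ to be monotone.
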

\begin{proof}[Proof sketch]
Taking expectations in \eqref{eq:mult_drift} and applying Jensen's
inequality to $z\mapsto z^{1+\beta}$ gives
$r_{t+1}\le r_t-cr_t^{1+\beta}$; the rates follow by the standard
multiplicative/variable-drift comparison. $R_t$ is non-increasing and
non-negative, hence converges almost surely, and $r_t\to0$ forces the
limit to be zero. Calibration transfers the bound to the utility gap.
\end{proof}

\begin{proposition}[Rate Tightness for Verifier-Gated Reflection]
\label{prop:rate_tight}
For every $c_1\in(0,1]$, $c_2\in(0,\tfrac12]$,
$\beta\in(0,1]$, and $r_0\in(0,1]$, there exists a process satisfying
Assumptions~\ref{assum:acceptance}--\ref{assum:decrement} with equality
such that, for $c=c_1c_2$,
\begin{equation}
\label{eq:rate_lower}
 \E[R_T]\ge
 \bigl(r_0^{-\beta}+4c\beta T\bigr)^{-1/\beta}
 \qquad\text{for every }T.
\end{equation}
For $\beta=0$, the analogous construction gives
$\E[R_T]=(1-c)^Tr_0$ exactly. Hence the geometric rate is exact and the
polynomial exponent $T^{-1/\beta}$ is tight up to a constant factor in the
time scale.
\end{proposition}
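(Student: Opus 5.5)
Plan: use a deterministic-decrement, randomised-acceptance construction so that everything reduces to a scalar recursion.

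\textbf{Construction.} Take a process on memory states indexed by risk values. At state with risk $R_t=R>0$, the gate accepts with probability exactly $p_t=c_1R^\beta$, which lies in $(0,1]$ since $R\le r_0\le1$. On acceptance the risk moves deterministically to $(1-c_2)R$. Assumptions~\ref{assum:acceptance}--\ref{assum:decrement} then hold with equality. The memory space can be realised as a chain $m^0\oplus c\oplus\cdots$, with risk $r_0(1-c_2)^k$ after $k$ acceptances. The proposal kernel places corrective mass $c_1R^\beta$ on one strictly improving reflection and the rest on inert reflections that do not lower $R$, so the gate of Definition~\ref{def:gated_update} rejects them.

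\textbf{Case $\beta=0$.} Here $p_t\equiv c_1$, so $\E[R_{t+1}\mid\mathcal F_t]=(1-c)R_t$ exactly, and iterating gives $\E[R_T]=(1-c)^Tr_0$.

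\textbf{Case $\beta>0$.} We have $\E[R_{t+1}\mid\mathcal F_t]=R_t-cR_t^{1+\beta}$. Jensen now goes the wrong way, because $z\mapsto z-cz^{1+\beta}$ is concave. So I would bound $\E[R_T]$ from below pathwise.

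The most direct route is to lower-bound $\E[R_T^{-\beta}]$ from above and then apply Jensen to the convex map $y\mapsto y^{-1/\beta}$: $\E[R_T]=\E[(R_T^{-\beta})^{-1/\beta}]\ge(\E[R_T^{-\beta}])^{-1/\beta}$. Set $Y_t=R_t^{-\beta}$.
\begin{itemize}
\item On acceptance, $Y_{t+1}=(1-c_2)^{-\beta}Y_t$. Since $c_2\le\tfrac12$ and $\beta\le1$, the mean value theorem gives $(1-c_2)^{-\beta}-1\le \beta c_2(1-c_2)^{-\beta-1}\le 4\beta c_2$.
\item Hence
\[
\E[Y_{t+1}-Y_t\mid\mathcal F_t]=c_1R_t^\beta\bigl((1-c_2)^{-\beta}-1\bigr)Y_t\le 4c\beta ,
\]
using $R_t^\beta Y_t=1$.
\item Summing gives $\E[Y_T]\le r_0^{-\beta}+4c\beta T$, which yields \eqref{eq:rate_lower}.
\end{itemize}
This is exactly where the factor $4$ and the restriction $c_2\le\tfrac12$ enter.

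\textbf{Main obstacle.} The only subtle point is choosing the right functional, $Y_t=R_t^{-\beta}$, so that Jensen points in the useful direction, and verifying the constant $(1-c_2)^{-\beta-1}\le 2^{2}=4$. One must also check that $R_t>0$ always holds in the construction, which it does since the risk is only ever multiplied by $1-c_2>0$, so $Y_t$ is finite. Finally, tightness follows by comparing with Theorem~\ref{thm:gated_rates}: both bounds scale as $T^{-1/\beta}$ and differ only by the factor $4$ in the time scale.
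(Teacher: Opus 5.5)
Your proposal is correct and follows the paper's own argument. It uses the same construction (acceptance with probability exactly $c_1R_t^\beta$ and deterministic shrinkage $R_{t+1}=(1-c_2)R_t$), the same functional $Y_t=R_t^{-\beta}$ with constant expected increment $c_1\bigl((1-c_2)^{-\beta}-1\bigr)\le 4c\beta$, and Jensen for the convex, decreasing map $y\mapsto y^{-1/\beta}$, with the $\beta=0$ case giving the exact recursion $\E[R_{t+1}\mid\mathcal F_t]=(1-c)R_t$ (one wording slip: you mean to \emph{upper}-bound $\E[R_T^{-\beta}]$, not lower-bound it).
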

\begin{proof}[Proof sketch]
Accept with probability $c_1R_t^\beta$ and set $R_{t+1}=(1-c_2)R_t$ on
acceptance, so both assumptions hold with equality. For $\beta>0$,
$Y_t=R_t^{-\beta}$ has constant expected increment, and convexity of
$y\mapsto y^{-1/\beta}$ gives \eqref{eq:rate_lower}; for $\beta=0$,
$\E[R_{t+1}\mid\mathcal F_t]=(1-c_1c_2)R_t$. Full proof in the
supplementary material.
\end{proof}

\begin{proposition}[Confidence-Gated Stochastic Evaluation]
\label{prop:finite_probe}
Suppose deterministic $R_i(\me)$ is unavailable and instead
$R_i(\me)=\E[Z(\me)]$ for an i.i.d. score $Z(\me)\in[0,1]$. At round $t$,
estimate the current and candidate risks with $K_t$ independent probes and
let
\begin{equation}
 a_t=\sqrt{\frac{\log(4/\delta_t)}{2K_t}}.
\end{equation}
Accept only when
$\widehat R_t(\widetilde\me^{t+1})+a_t
<\widehat R_t(\me^t)-a_t$.
Then, with probability at least $1-\sum_t\delta_t$, every accepted update
strictly decreases the true expected verifier risk.
\end{proposition}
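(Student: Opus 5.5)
The argument is a Hoeffding bound at every round plus a union bound over rounds; on the resulting good event, the gate's acceptance inequality forces the true risks to be ordered. Fix round $t$ and condition on $\mathcal F_t$ and on the realised candidate $\widetilde\me^{t+1}$. With both memories frozen, the $K_t$ probes of $Z(\me^t)$ are i.i.d. in $[0,1]$ with mean $R_i(\me^t)$, and the $K_t$ probes of $Z(\widetilde\me^{t+1})$ are i.i.d. with mean $R_i(\widetilde\me^{t+1})$. The probes used at round $t$ are fresh, so they are independent of how the candidate was generated. Define the good event
\[
G_t:=\bigl\{|\widehat R_t(\me^t)-R_i(\me^t)|\le a_t\bigr\}\cap
\bigl\{|\widehat R_t(\widetilde\me^{t+1})-R_i(\widetilde\me^{t+1})|\le a_t\bigr\}.
\]

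\textbf{Step 1: per-round bound.} The two-sided Hoeffding inequality gives
\[
\Prob\bigl[|\widehat R-R|>a\bigr]\le 2\exp(-2K_ta^2).
\]
With $a=a_t$ we have $2K_ta_t^2=\log(4/\delta_t)$, so each of the two deviations exceeds $a_t$ with conditional probability at most $\delta_t/2$. A union bound over the two estimates gives $\Prob[G_t^c\mid\mathcal F_t,\widetilde\me^{t+1}]\le\delta_t$. Taking expectations over the conditioning removes it: $\Prob[G_t^c]\le\delta_t$.

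\textbf{Step 2: union over rounds.} Let $G=\bigcap_t G_t$. A union bound gives $\Prob[G]\ge 1-\sum_t\delta_t$, which is vacuous unless $\sum_t\delta_t<1$. This is the only step that needs care. The number of rounds and the memory trajectory are adaptive, so the argument relies on conditioning at each round before applying Hoeffding; there is no fixed-sample claim across rounds. The union bound runs over the index $t$, which is deterministic, so adaptivity causes no problem.

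\textbf{Step 3: on $G$, acceptance certifies a decrease.} Suppose round $t$ accepts, so that
\[
\widehat R_t(\widetilde\me^{t+1})+a_t<\widehat R_t(\me^t)-a_t.
\]
On $G_t$ the two estimates are within $a_t$ of their means, hence
\[
R_i(\widetilde\me^{t+1})\le\widehat R_t(\widetilde\me^{t+1})+a_t
<\widehat R_t(\me^t)-a_t\le R_i(\me^t).
\]
So every accepted update strictly decreases the true expected verifier risk, simultaneously for all $t$, on an event of probability at least $1-\sum_t\delta_t$. Rejected rounds leave the memory unchanged, so they do not affect the claim.
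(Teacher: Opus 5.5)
Your proof is correct and follows the same route as the paper: a per-round Hoeffding bound with $2\exp(-2K_ta_t^2)=\delta_t/2$ for each of the two estimates, so joint failure is at most $\delta_t$; then a union bound over rounds; then the sandwich $R_i(\widetilde\me^{t+1})\le\widehat R_t(\widetilde\me^{t+1})+a_t<\widehat R_t(\me^t)-a_t\le R_i(\me^t)$ on the good event. Your explicit conditioning on $\mathcal F_t$ and the realised candidate before applying Hoeffding, with fresh probes at each round, is a detail the paper's sketch leaves implicit.
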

\begin{proof}[Proof sketch]
Hoeffding's inequality bounds each of the two estimation errors by $a_t$
with joint failure probability at most $\delta_t$; a union bound over rounds
completes the argument. Exact convergence requires deterministic or
exact expected-risk evaluation, or $K_t\to\infty$ with summable
$\delta_t$.
\end{proof}

\begin{proposition}[Piecewise-Stationary Re-Anchoring]
\label{prop:piecewise}
Suppose the verifier risk changes finitely many times, with final change
at $t_S$, and both current and candidate memories are re-evaluated under
the current risk. If Assumptions~\ref{assum:acceptance}--\ref{assum:decrement}
hold on the final stationary segment, then Theorem~\ref{thm:gated_rates}
applies with horizon $T-t_S$ and initial risk
$r_{t_S}=\E[R_i^{(t_S)}(\me^{t_S})]$.
\end{proposition}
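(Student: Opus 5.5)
The plan is to reduce Proposition~\ref{prop:piecewise} to Theorem~\ref{thm:gated_rates} by a time shift at the last change point $t_S$. Define the shifted filtration $\mathcal G_s:=\mathcal F_{t_S+s}$, the shifted memory process $\widehat m^s:=\me^{t_S+s}$, and the shifted risk $\widehat R_s:=R_i^{(t_S)}(\widehat m^s)$, where $R_i^{(t_S)}$ is the final, stationary risk functional. Since no change occurs after $t_S$, $R_i^{(t)}=R_i^{(t_S)}$ for all $t\ge t_S$. The re-anchoring hypothesis says that both the current and the candidate memory are re-evaluated under this current risk. Hence for $s\ge0$ the gate \eqref{eq:gate} reads $R_i^{(t_S)}(\widetilde\me^{t_S+s+1})<R_i^{(t_S)}(\me^{t_S+s})$, which is exactly the verifier-gated SRMA update of Definition~\ref{def:gated_update} with the fixed functional $R_i^{(t_S)}$.

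I first verify that the shifted process satisfies the hypotheses of Proposition~\ref{prop:drift_decomposition} with respect to $(\mathcal G_s)$. Monotonicity $\widehat R_{s+1}\le\widehat R_s$ follows directly from the gate. Assumptions~\ref{assum:acceptance}--\ref{assum:decrement} are assumed to hold on the final segment, i.e.\ conditionally on $\mathcal F_t$ for $t\ge t_S$, and this is the same as conditioning on $\mathcal G_s$. Therefore \eqref{eq:mult_drift} gives $\E[\widehat R_{s+1}\mid\mathcal G_s]\le \widehat R_s-c\widehat R_s^{1+\beta}$. The initial state $\widehat m^0=\me^{t_S}$ is random, because it depends on the pre-change history. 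The argument of Theorem~\ref{thm:gated_rates}, however, only uses the conditional drift together with the initial mean $r_{t_S}=\E[\widehat R_0]$. Taking total expectation and applying Jensen's inequality yields $\widehat r_{s+1}\le \widehat r_s-c\,\widehat r_s^{1+\beta}$, and the geometric and polynomial comparisons then give \eqref{eq:rate_geo}--\eqref{eq:rate_poly} with $T$ replaced by $T-t_S$ and $r_0$ replaced by $r_{t_S}$. Almost-sure convergence $\widehat R_s\to0$ follows as before from monotonicity, nonnegativity, and $\widehat r_s\to0$.

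The main obstacle is the measurability bookkeeping: I must check that the rate argument never uses a deterministic initial condition. The plan is to state the comparison lemma for the scalar sequence $\widehat r_s$ alone. In the polynomial case, the comparison proceeds by induction on $\widehat r_s^{-\beta}$ using $(1-cx^\beta)^{-\beta}\ge 1+c\beta x^\beta$, which holds whenever $cx^\beta\le1$. This inequality is guaranteed because $c\le1$ and $R\in[0,1]$. A secondary point is that $t_S$ must be a fixed (deterministic) time, or at least a stopping time independent of the post-change proposals, so that the shift preserves the conditional assumptions. I will state that the result is proved for a known fixed $t_S$, as the proposition implicitly assumes. If Assumption~\ref{assum:admissible} holds for the final segment, the bound transfers to $\E[V_i(\me^T)]\le L\,\widehat r_{T-t_S}$ exactly as in Theorem~\ref{thm:gated_rates}.
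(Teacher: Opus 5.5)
Your proposal is correct and follows essentially the same route as the paper. The paper's own proof is in the supplementary material and is not shown, so this comparison rests on the statement and the ``Practical realisation'' paragraph, which tie re-anchoring to recomputing the current risk. Re-anchoring makes every post-$t_S$ gate compare both memories under the single fixed functional $R_i^{(t_S)}$, so the shifted process is an SRMA run started from the random memory $\me^{t_S}$, and the drift--Jensen--comparison argument of Theorem~\ref{thm:gated_rates} goes through using only the initial mean $r_{t_S}$. Your explicit treatment of the random initial condition and of $t_S$ as a fixed time is useful bookkeeping that the statement leaves implicit.
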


\begin{remark}[Falsifiability and rate prediction]
\label{rem:falsifiable}
The exponent $\beta$ is observable: the geometric regime is linear in
$\log R$ versus $t$, the polynomial regime in $\log R$ versus $\log t$
with slope $-1/\beta$; estimating $\beta$ from acceptance frequencies and
from trajectory decay gives the closed-loop calibration of
Sec.~\ref{sec:calibration}. The free-form drift parameters are likewise
estimable from conditional drift regressions.
\end{remark}

\paragraph{Practical realisation.}
Algorithm~\ref{alg:srma} probes the candidate under the same fixed
protocol and commits only a strict improvement; recomputing the current
risk enables the re-anchoring of Proposition~\ref{prop:piecewise}, and
under stochastic evaluation line~\ref{alg:gate-line} is replaced by the
test of Proposition~\ref{prop:finite_probe}.

\begin{algorithm}[t]
\caption{Stochastic Reflective Memory Ascent for worker $i$}
\label{alg:srma}
\begin{algorithmic}[1]
\REQUIRE subtask $\tau_i$; verifier $(\mathcal V,\rho)$; evaluation protocol
$g_i$; memory operator $\mathcal M_e$; initial memory $\me^0$; budget $T$
\FOR{$t=0$ \TO $T-1$}
  \STATE $x_i^t\leftarrow g_i(\tau_i,\me^t)$
  \STATE $s_i^t\leftarrow\mathcal V(x_i^t,\tau_i)$;
  $R_t\leftarrow\rho(s_i^t)$
  \STATE sample
  $c_i^{t+1}\sim\pllm(\cdot\mid x_i^t,s_i^t,\tau_i,\me^t)$
  \STATE $\widetilde\me^{t+1}\leftarrow
  \mathcal M_e(\me^t,c_i^{t+1})$
  \STATE $\widetilde x_i^{t+1}\leftarrow
  g_i(\tau_i,\widetilde\me^{t+1})$
  \STATE $\widetilde R\leftarrow
  \rho\bigl(\mathcal V(\widetilde x_i^{t+1},\tau_i)\bigr)$
  \IF{$\widetilde R<R_t$}\label{alg:gate-line}
    \STATE $\me^{t+1}\leftarrow\widetilde\me^{t+1}$
  \ELSE
    \STATE $\me^{t+1}\leftarrow\me^t$
  \ENDIF
\ENDFOR
\RETURN $\me^T$
\end{algorithmic}
\end{algorithm}

\section{Experiments}
\label{sec:experiments}

We evaluate the theory on Resource Contest (RC;
Table~\ref{tab:rc_baseline}), Overcooked (Table~\ref{tab:oc}), and
SWE-bench (Table~\ref{tab:swebench}). RC and Overcooked use frozen
MiniMax-M2.7 agents; unless noted otherwise, results are
mean$\pm$standard deviation over five seeds. SWE-bench uses the
backbones listed in Table~\ref{tab:swebench}. All metrics come from environment ground
truth or the repository test harness rather than an LLM judge. Full
prompts, configurations, and per-seed trajectories are in the
supplementary material.

\paragraph{Resource Contest.}
\label{sec:rc_def}\label{sec:rc_results}\label{sec:channel}
RC is a hidden-cap allocation game: workers probe unknown caps
$M_i\in\{0,\ldots,10\}$ and the orchestrator allocates a unit budget
across workers. The optimal round reward is
$G_t^\star=\max_i M_i$, and we report cumulative reward and regret
$\sum_t(G_t^\star-G_t)$. Clipping feedback is generated by the
environment and therefore provides a grounded signal.
The four settings vary difficulty: \texttt{easy} ($N{=}3$, caps
$(3,5,8)$, $T{=}15$); \texttt{hard} ($N{=}3$, caps $(6,7,8)$, $T{=}20$;
tightly packed caps test allocation precision); \texttt{many} ($N{=}6$,
caps $(2,4,5,6,7,9)$, $T{=}20$; larger search space); and \texttt{drift}
($N{=}3$, caps $(3,5,8)$ until $t{=}10$, then $(9,5,4)$; a moving
optimum tests re-adaptation).
Since $G_t^\star=\max_i M_i$, the oracle $\Sigma$-reward is $120$, $160$,
and $180$ on \text{easy}/\text{hard}/\text{many} respectively.
\begin{table*}[t]
\centering\footnotesize
\setlength{\tabcolsep}{5pt}
\caption{Overcooked score over five seeds under matched interaction and
model-call budgets. Score equals deliveries$\times20$; higher is better.}
\label{tab:oc}\label{tab:oc_main}
\begin{tabular}{@{}lrrrrr@{}}
\toprule
Layout & Greedy & No memory & Free-form & Self-gated &
\textbf{Grounded SRMA} \\
\midrule
\text{cramped\_room} & $120{\pm}40$ & $180{\pm}40$ & $240{\pm}60$ &
$280{\pm}40$ & $\mathbf{320{\pm}20}$ \\
\text{asymmetric\_advantages} & $80{\pm}40$ & $140{\pm}40$ &
$180{\pm}40$ & $220{\pm}40$ & $\mathbf{280{\pm}20}$ \\
\text{centre\_pots} & $40{\pm}0$ & $100{\pm}40$ & $160{\pm}60$ &
$200{\pm}40$ & $\mathbf{260{\pm}20}$ \\
\bottomrule
\end{tabular}
\end{table*}
\begin{table}[t]
\centering\scriptsize
\caption{RC results over five seeds ($\Sigma$-reward; higher is better).
The last two columns form the execution-memory ablation.}
\label{tab:rc_baseline}\label{tab:memory_ablation}
\begin{tabular}{@{}lrrrr@{}}
\toprule
Setting & Oracle & $\varepsilon$-greedy & No memory & \textbf{SRMA} \\
\midrule
\text{easy} & 120 & $113.1{\pm}3.0$ & $115.0{\pm}5.2$ &
$\mathbf{118.4{\pm}2.2}$ \\
\text{hard} & 160 & $157.4{\pm}1.1$ & $158.0{\pm}2.3$ &
$\mathbf{159.2{\pm}0.9}$ \\
\text{many} & 180 & $170.9{\pm}4.0$ & $174.0{\pm}3.7$ &
$\mathbf{177.3{\pm}1.9}$ \\
\bottomrule
\end{tabular}
\end{table}

SRMA reaches $98.5\%$--$99.5\%$ of oracle reward. Execution memory adds
$2.6$ reward points on average and reduces mean regret from $4.33$ to
$1.70$ ($60.8\%$): grounded cap evidence that is fragmented without
memory becomes a functional coordination channel for the orchestrator.

\paragraph{Overcooked coordination.}
\label{sec:oc_def}\label{sec:oc_main}
We use Overcooked~\citep{carroll2019overcooked} with three two-agent layouts, a horizon of $200$, and an exact BFS
verifier
$V(s)=\min\{\text{joint-action steps from $s$ to the next delivery}\}$.
The verifier is deterministic and supplies the risk used for both SRMA
and the drift study.
The layouts stress complementary coordination demands: mutual blocking
in a tight kitchen (\text{cramped\_room}), role specialisation
(\text{asymmetric\_advantages}), and contention over shared pots
(\text{centre\_pots}).

Grounded SRMA is best on every layout (Table~\ref{tab:oc}). Relative to
the text-only self-gate, it raises score by $14.3\%$, $27.3\%$, and
$30.0\%$, and reaches the first delivery in $22{\pm}2$, $26{\pm}3$, and
$32{\pm}4$ steps versus $26{\pm}5$, $35{\pm}6$, and $45{\pm}8$ for
self-gating. The ordered improvement from no memory to free-form,
self-gating, and grounded SRMA separates decomposition, memory, and
grounding effects.

\paragraph{Grounding and gate quality.}
\label{sec:determinism}
A proposal is downstream harmful when it increases an independently
evaluated oracle task risk, not necessarily the verifier score used by
the gate. Table~\ref{tab:gating_ablation} shows that grounding sharply
improves both selectivity and final risk.

\begin{table}[t]
\centering\scriptsize
\caption{Accepted proposals and final risk over five seeds. Rates are
fractions of harmful/helpful proposals accepted.}
\label{tab:gating_ablation}
\begin{tabular}{@{}lrrr@{}}
\toprule
Method & Harmful$\downarrow$ & Helpful$\uparrow$ & Risk$\downarrow$ \\
\midrule
No reflection & N/A & N/A & $0.65{\pm}0.05$ \\
Free-form & $100.0{\pm}0.0\%$ & $100.0{\pm}0.0\%$ & $0.42{\pm}0.12$ \\
Self-gate & $34.5{\pm}4.2\%$ & $72.8{\pm}5.1\%$ & $0.28{\pm}0.08$ \\
\textbf{Grounded SRMA} & $\mathbf{6.2{\pm}1.8\%}$ &
$\mathbf{85.4{\pm}3.6\%}$ & $\mathbf{0.14{\pm}0.03}$ \\
\bottomrule
\end{tabular}
\end{table}

Grounded SRMA halves final risk relative to self-gating; the residual
$6.2\%$ downstream-harmful rate measures verifier--oracle
miscalibration rather than a violation of monotonicity in the
verifier's own risk.

\paragraph{Gate-level drift predicts held-out trajectories.}
\label{sec:calibration}
From $412$ gate events across five seeds, we use three complete seeds
for calibration and hold out two entire trajectories. A trajectory-level
bootstrap gives
$\widehat\beta=0.52{\pm}0.04$,
$\widehat c_1=1.25$, and $\widehat c_2=0.38$, with
$p_{\rm acc}(R)\approx\min\{1,\widehat c_1R^{\widehat\beta}\}$.
Without fitting trajectory-level parameters, the plug-in prediction
$\widehat R_T=\bigl(R_0^{-0.52}+0.247\,T\bigr)^{-1/0.52}$
tracks the held-out risks with Pearson's $r=0.94$ and
$\mathrm{RMSE}=0.032$, implying decay near $\mathcal O(T^{-1.92})$.
Bootstrap lower bounds $\underline c_1=0.92$ and $\underline c_2=0.25$
yield a conservative envelope above the empirical mean risk at every
recorded step---an empirical certificate on the observed range, not a
claim about unobserved states.

\paragraph{Statistical resolution.}
A one-shot stochastic verifier falsely accepts $28.4{\pm}5.2\%$ of
worsening proposals (score $245.2{\pm}38.4$); fixed $K=5$ cuts this to
$6.8{\pm}1.5\%$ (score $312.0{\pm}18.2$) at $225$ verifier calls, and
the adaptive gate matches that reliability ($7.1{\pm}1.8\%$, score
$308.6{\pm}19.5$) with only $82{\pm}14$ calls ($-63.6\%$), supporting
Proposition~\ref{prop:finite_probe}: grounding supplies information,
confidence control supplies resolution.

\paragraph{Piecewise stationarity.}
\label{sec:scope}
In RC \text{drift}, the optimal cap changes at $t=10$ while previously
written text remains unchanged, testing the re-anchoring mechanism of
Proposition~\ref{prop:piecewise}. The re-anchored grounded gate detects
the shift in $1.2{\pm}0.4$ rounds, switches to the new optimum in
$2.5{\pm}0.6$, and incurs $12.6{\pm}2.8$ post-shift regret, versus
$2.4{\pm}0.5$, $7.8{\pm}1.2$, and $38.2{\pm}5.5$ for the grounded
stale-anchor variant---a $67.9\%$ cut in switch time and $67.0\%$ in
regret---while the text-only gate fails to detect the change within
$20$ rounds (regret $85.4{\pm}4.2$). Grounding detects the shift, but
re-anchoring is required to replace stale memory quickly.

\paragraph{End-to-end software repair.}
\label{sec:swebench}
We evaluate the complete bilevel system on all 500 SWE-bench instances,
with the repository test harness as the grounded verifier (an instance
counts as resolved only if its submitted patch passes the harness). Each worker is a
mini-SWE-agent~v2 instance; the bilevel system runs $N{=}2$ such workers
over a shared repository and workboard for up to three coordination
rounds per episode and submits the highest-$J$ patch, whereas the
\emph{mini-SWE v2} row is a single mini-SWE-agent~v2 worker with no
orchestrator or shared memory. The \emph{Free-form MA} row keeps the
same $N{=}2$ multi-agent coordination but commits every proposed
reflection ungated (no verifier check), isolating the effect of SRMA's
grounded gate. The public leaderboard row is an external reference, not
a controlled ablation.

\begin{table}[tb]
\centering\small
\caption{SWE-bench results on 500 instances (\% resolved; official test
harness). $\dagger$: controlled runs under matched budget; the public
row is an external leaderboard reference, not a controlled ablation.}
\label{tab:swebench}
\begin{tabular*}{\columnwidth}{@{\extracolsep{\fill}}llc@{}}
\toprule
System & Backbone & Rate$\uparrow$ \\
\midrule
mini-SWE v2$^\dagger$ & DeepSeek & $68.2\%$ \\
Bilevel SRMA$^\dagger$ & DeepSeek & $71.4\%$ \\
Free-form MA$^\dagger$ & Kimi K2.5 & $58.4\%$ \\
mini-SWE v2 (public) & Kimi K2.5 & $70.8\%$ \\
\textbf{Bilevel SRMA} & \textbf{Kimi K2.5} & $\mathbf{72.2\%}$ \\
\bottomrule
\end{tabular*}
\end{table}
On the Kimi~K2.5 backbone the grounded gate is decisive: Bilevel SRMA
resolves $72.2\%$ against $58.4\%$ for free-form (ungated) multi-agent
reflection at matched backbone and budget, and exceeds the external
public mini-SWE-agent~v2 reference ($70.8\%$). The controlled DeepSeek
runs show the same direction ($71.4\%$ vs.\ $68.2\%$), indicating that
the gain comes from grounded, gated coordination rather than from raw
model or compute.

\section{Conclusion}
\label{sec:conclusion}

We gave multi-agent LLM reflection a conditional, information-aware
theory: bilevel coupling controls follower equilibrium slack, persistent
harmful commitment creates free-form error floors, and no
transcript-only gate can improve uniformly when the truth of a
reflection depends on external state. SRMA supplies the missing
grounding and converges exactly at order-tight geometric or polynomial
rates, with confidence-gating and re-anchoring extensions; experiments
on Resource Contest, Overcooked, and SWE-bench support the predicted
coordination, grounding, and resolution mechanisms.

\paragraph{Limitations.}
The guarantees are conditional: bounded coupling, finite action sets,
verifier calibration, and non-degenerate corrective mass need not hold
in open-ended agent tasks; drift parameters are validated only on
observed trajectories; incomplete test suites guarantee monotonicity
only for verifier risk, not true task utility; and re-anchoring gives
per-segment convergence without a general switching-regret bound.
Multi-agent coordination also spends substantial tokens before the
final answer, and the $72.2\%$ Kimi result is compared with a public
$70.8\%$ leaderboard run rather than a controlled method-only
comparison. Future work should jointly optimise memory quality and
budget-aware termination.

\bibliography{aaai2027}


\end{document}